\title{Tuned Contrastive Learning}
\author{%
  Chaitanya Animesh
    \\
  UC San Diego\\
  \texttt{canimesh@ucsd.edu} \\
  \And
  Manmohan Chandraker \\
  UC San Diego \\
  \texttt{mkchandraker@ucsd.edu} \\
}
\begin{document}

\maketitle

\begin{abstract}
In recent times, contrastive learning based loss functions have become increasingly popular for visual self-supervised representation learning owing to their state-of-the-art (SOTA) performance. Most of the modern contrastive learning methods generalize only to one positive and multiple negatives per anchor. A recent state-of-the-art, supervised contrastive (SupCon) loss, extends self-supervised contrastive learning to supervised setting by generalizing to multiple positives and negatives in a batch and improves upon the cross-entropy loss. In this paper, we propose a novel contrastive loss function -- Tuned Contrastive Learning (TCL) loss, that generalizes to multiple positives and negatives in a batch and offers parameters to tune and improve the gradient responses from hard positives and hard negatives. We provide theoretical analysis of our loss function's gradient response and show mathematically how it is better than that of SupCon loss. We empirically compare our loss function with SupCon loss and cross-entropy loss in supervised setting on multiple classification-task datasets to show its effectiveness. We also show the stability of our loss function to a range of hyper-parameter settings. Unlike SupCon loss which is only applied to supervised setting, we show how to extend TCL to self-supervised setting and empirically compare it with various SOTA self-supervised learning methods. Hence, we show that TCL loss achieves performance on par with SOTA methods in both supervised and self-supervised settings.
\end{abstract}

\section{Introduction}

Paucity of labeled data limits the application of supervised learning to various visual learning tasks \cite{ARB}. As a result, unsupervised \cite{GAN_Goodfellow,Unsup_DimReduce,Unsup_FineGrained} and self-supervised based learning methods \cite{SimCLR,BarlowTwins,BYOL,Swav} have garnered a lot of attention and popularity for their ability to learn from vast unlabeled data. Such methods can be broadly classified into two categories: generative methods and discriminative methods. Generative methods \cite{GAN_Goodfellow, Unsup_FineGrained} train deep neural networks to generate in the input space i.e. the pixel space and hence, are computationally expensive and not necessary for representation learning. On the other hand, discriminative approaches  \cite{gidaris2018unsupervised, doersch2015unsupervised, zhang2016colorful, Oord_CPC, Bachman_MI, SimCLR} train deep neural networks to learn representations for pretext tasks using unlabeled data and an objective function. Out of these discriminative based approaches, contrastive learning based methods \cite{Oord_CPC, Bachman_MI, SimCLR} have performed significantly well and are an active area of research.

The common principle of contrastive learning based methods in an unsupervised setting is to create semantic preserving transformations of the input which are called \emph{positives} and treat transformations of other samples in a batch as \emph{negatives} \cite{ssl_cookbook,Supcon}. The contrastive loss objective considers every transformed sample as a reference sample, called an \emph{anchor}, and is then used to train the network architecture to pull the positives (for that anchor) closer to the anchor and push the negatives away from the anchor in latent space \cite{ssl_cookbook,Supcon}. The positives are often created using various data augmentation strategies. Supervised Contrastive Learning \cite{Supcon} extended contrastive learning to supervised setting by using the label information and treating the other samples in the batch having the same label as that of the anchor also as positives in addition to the ones produced through data augmentation strategies. It presents a new loss called supervised contrastive loss (abbreviated as SupCon loss) that can be viewed as a loss generalizing to multiple available positives in a batch.

\begin{figure}
  \centering
  \begin{tabular}{cc}
  \includegraphics[width=0.225\textwidth]{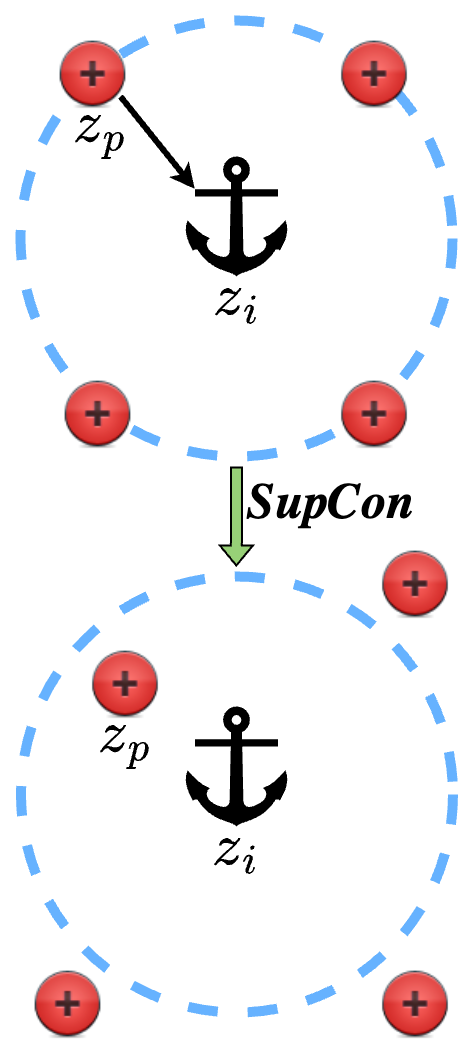} &\hspace{50pt}
  \includegraphics[width=0.225\textwidth]{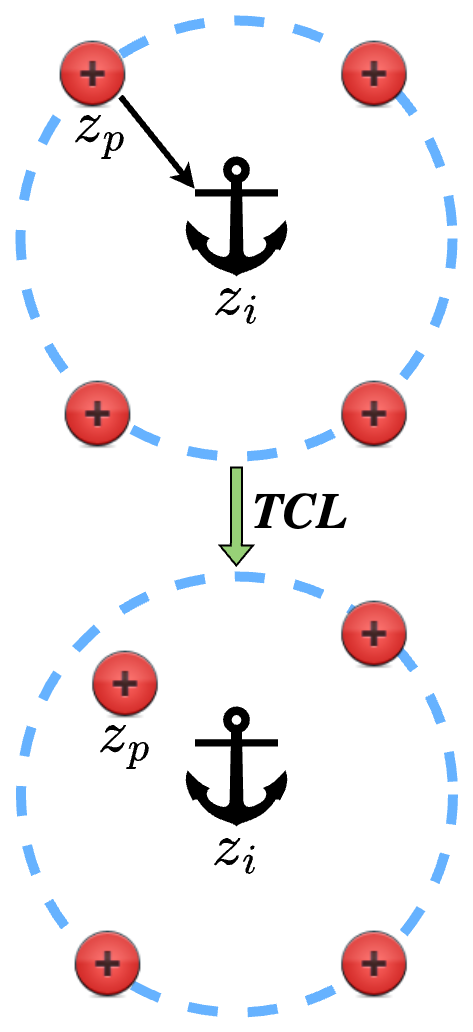}
  \end{tabular}
  \caption{Figure illustrates intuitively how TCL loss differs from SupCon loss \cite{Supcon}. For the SupCon loss per sample — $L_{i}^{sup}$ (from equation \ref{L_i_sup}) to decrease, the anchor $z_{i}$ will pull the positive $z_{p}$ but push away the other positives by some extent in the embedding space. TCL loss introduces parameters to reduce this effect and helps improve performance.}
\end{figure}

In this work, we propose a novel contrastive learning loss objective, which we call \textbf{Tuned Contrastive Learning (TCL) Loss} that can use multiple positives and multiple negatives present in a batch. We show how it can be used in supervised as well as self-supervised settings. TCL loss improves upon the limitations of the SupCon loss: 1. Implicit consideration of positives as negatives and, 2. No provision of regulating hard negative gradient response. TCL loss thus gives better gradient response to hard positives and hard negatives. This leads to small (< 1\% in terms of classification accuracy) but consistent improvements in performance over SupCon loss and comprehensive outperformance over cross-entropy loss. Since TCL generalizes to multiple positives, we then present a novel idea of having and using positive triplets (and possibly more) instead of being limited to positive pairs for self-supervised learning. We evaluate our loss function in self-supervised settings without making use of any label information and show how TCL outperforms SimCLR \cite{SimCLR} and performs on par with various SOTA self-supervised learning methods \cite{BYOL,BarlowTwins,ARB,Swav,MOCO,MOCOV2,w_mse,SimSiam,Vicreg}. Our key contributions in the paper are as follows:

\begin{enumerate}
    \item We identify and analyse in detail two limitations of the supervised contrastive (SupCon) loss.
    \item We present a novel contrastive loss function called Tuned Contrastive Learning (TCL) loss that generalizes to multiple positives and multiple negatives in a batch, overcomes the described limitations of the SupCon loss and is applicable in both supervised and self-supervised settings. We mathematically show with clear proofs how our loss's gradient response is better than that of SupCon loss.
    \item We compare TCL loss with SupCon loss (as well as cross-entropy loss) in supervised settings on various classification datasets and show that TCL loss gives consistent improvements in top-1 accuracy over SupCon loss. We empirically show the stability of TCL loss to a range of hyperparameters: network architecture, batch size, projector size and augmentation strategy.
    \item At last, we present a novel idea of having positive triplets (and possibly more) instead of positive pairs and show how TCL can be extended to self-supervised settings. We empirically show that TCL outperforms SimCLR, and performs on par with various SOTA self-supervised learning (SSL) methods. 
    
\end{enumerate}

\section{Related Work}

In this section, we cover various popular and recent works in brief involving contrastive learning.

Deep Metric learning methods originated with the idea of contrastive losses and were introduced with the goal of learning a distance metric between samples in a high-dimensional space \cite{ssl_cookbook}. The goal in such methods is to learn a function that maps similar samples to nearby points in this space, and dissimilar samples to distant points. There is often a margin parameter, $m$, imposing the distance between examples from different classes to be larger than this value of $m$ \cite{ssl_cookbook}. The triplet loss \cite{tripletloss} and the proposed improvements \cite{quadruplet_loss,lifted_structured_loss} on it used this principle. These methods rely heavily on sophisticated sampling techniques for choosing samples in every batch for better training.

SimCLR \cite{SimCLR}, an Info-NCE \cite{Oord_CPC} loss based framework, learns visual representations by increasing the similarity between the embeddings of two augmented views of the input image. Augmented views generally come from a series of transformations like  random resizing, cropping, color jittering, and random blurring. Although they make use of multiple negatives, only one positive is available per anchor. They require large batch sizes in order to have more hard negatives in the batch to learn from and boost the performance. SupCon loss \cite{Supcon} applies contrastive learning in supervised setting by basically extending the SimCLR loss to generalize to multiple positives available in a batch and improves upon the cross-entropy loss which lacks robustness to noisy labels \cite{ce_issue_noisy_1,ce_issue_noisy_2} and has the possibility of poor margins \cite{ce_issue_margin_1,ce_issue_margin_2}.

Unlike SimCLR or SupCon, many SOTA SSL approaches only work with positives (don't require negatives) or use different approach altogether. BYOL \cite{BYOL} uses asymmetric networks with one network using an additional predictor module while the other using exponential moving average (EMA) to update its weights, in order to learn using positive pairs only and prevent collapse. SimSiam \cite{SimSiam} uses stop-gradient operation instead of EMA and asymmetric networks to achieve the same goal. Barlow Twins \cite{BarlowTwins} objective function on the other hand computes the cross-correlation matrix between the embeddings of two identical networks fed with augmentations of a batch of samples, and tries to make this matrix close to identity. SwAV uses a clustering approach and enforces consistency between the cluster assignments of multiple positives produced through multi-crop strategy \cite{Swav}.

\section{Methodology}
\subsection{Supervised Contrastive Learning \& Its Issues}

The framework for Supervised Contrastive Learning consists of three components: a data augmentation module that produces two augmentations for each sample in the batch, an encoder network that maps the augmentations to their corresponding representation vectors and a projection network that produces normalized embeddings for the representation vectors to be fed to the loss function. The projection network is later discarded and the encoder network is used at inference time by training a linear classifier (attached to the frozen encoder) with cross-entropy loss. Section 3.1 of \cite{Supcon} contains more details on this.
The SupCon loss is given by the following two equations (refers to $L_{out}^{sup}$ in \cite{Supcon}):

\begin{equation}
    L^{sup}= \sum_{i \in I}L_{i}^{sup}
\end{equation}
 where
\begin{equation}
 L_{i}^{sup}=\frac{-1}{|P(i)|} \sum_{p \in P(i)} \text{log}(\frac{\text{exp}(z_{i}.z_{p}/\tau)}{\sum_{p' \in P(i)} \text{exp}(z_{i}.z_{p'}/\tau) + \sum_{n \in N(i)} \text{exp}(z_{i}.z_{n}/\tau)})
\label{L_i_sup}
 \end{equation}

Here $I$ denotes the batch of samples obtained after augmentation and so, will be twice the size of the original input batch. $i \in I$ denotes a sample (anchor) within it. $z_{i}$ denotes the normalized projection network embedding for the sample $i$ as given by the projector network. $P(i)$ is the set of all positives for the anchor $i$ (except the anchor $i$ itself) i.e. positive from the augmentation module and positives with the same label as anchor $i$ in the batch $I$. $N(i)$ denotes the set of negatives in the batch such that $N(i) \equiv I \setminus (P(i) \cup \{i\} )$. As shown in Section 2 of the supplementary material of \cite{Supcon}, we have the following lemma:
\newtheorem{lemma}{Lemma}
\begin{lemma}
The gradient of the SupCon loss per sample — $L_{i}^{sup}$ with respect to the normalized projection network embedding $z_{i}$ is given by:

\begin{equation}
    \frac{\partial L_{i}^{sup}}{\partial z_{i}} = \frac{1}{\tau} (\underbrace{\sum_{p \in P(i)} z_{p}(P_{ip}^{s}-X_{ip})}_\text{Gradient response from positives}  \hspace{12pt} + \underbrace{\sum_{n \in N(i)} z_{n}P_{in}^{s}}_\text{Gradient response from negatives})
\label{L_i_sup_grad}
\end{equation}

where 

\begin{equation}X_{ip}= \frac{1}{|P(i)|}
\end{equation}
\begin{equation}
P_{ip}^{s} = \frac{\text{exp}(z_{i}.z_{p}/\tau)}{\sum_{a \in A(i)} \text{exp}(z_{i}.z_{a}/\tau)}
\label{P_ip_s}
\end{equation}
\begin{equation}P_{in}^{s} = \frac{\text{exp}(z_{i}.z_{n}/\tau)}{\sum_{a \in A(i)} \text{exp}(z_{i}.z_{a}/\tau)}
\label{P_in_s}
\end{equation}

\end{lemma}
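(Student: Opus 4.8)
The plan is to differentiate the per-sample SupCon loss $L_i^{sup}$ in equation \ref{L_i_sup} directly with respect to $z_i$, treating all other embeddings $z_p, z_{p'}, z_n$ as constants. First I would rewrite $L_i^{sup}$ using the log-of-quotient identity as a sum of two terms: $L_i^{sup} = \frac{-1}{|P(i)|}\sum_{p\in P(i)}\left[ z_i\cdot z_p/\tau - \log\left(\sum_{a\in A(i)}\exp(z_i\cdot z_a/\tau)\right)\right]$, where $A(i) \equiv P(i)\cup N(i) = I\setminus\{i\}$ is the full set of contrasted samples. Note that the log-sum-exp term does not depend on $p$, so the outer sum over $p\in P(i)$ of that term simply contributes a factor $|P(i)|$, which cancels the $\frac{1}{|P(i)|}$ prefactor; this bookkeeping is the first place to be careful.

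Next I would compute the two derivatives separately. The linear term $z_i\cdot z_p/\tau$ differentiates to $z_p/\tau$, and after the outer averaging this gives $\frac{-1}{\tau}\cdot\frac{1}{|P(i)|}\sum_{p\in P(i)} z_p = \frac{-1}{\tau}\sum_{p\in P(i)} z_p X_{ip}$, using the definition $X_{ip} = 1/|P(i)|$. For the log-sum-exp term, the standard softmax gradient identity gives $\frac{\partial}{\partial z_i}\log\left(\sum_{a\in A(i)}\exp(z_i\cdot z_a/\tau)\right) = \frac{1}{\tau}\sum_{a\in A(i)} z_a \frac{\exp(z_i\cdot z_a/\tau)}{\sum_{a'\in A(i)}\exp(z_i\cdot z_{a'}/\tau)} = \frac{1}{\tau}\sum_{a\in A(i)} z_a P_{ia}^s$, where $P_{ia}^s$ matches the definitions in equations \ref{P_ip_s} and \ref{P_in_s}. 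With the $-$ sign and the cancellation of $|P(i)|$ noted above, this contributes $\frac{+1}{\tau}\sum_{a\in A(i)} z_a P_{ia}^s$.

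Finally I would combine the two pieces and split the sum over $A(i)$ into its positive part $P(i)$ and negative part $N(i)$: $\frac{\partial L_i^{sup}}{\partial z_i} = \frac{1}{\tau}\left(\sum_{p\in P(i)} z_p P_{ip}^s - \sum_{p\in P(i)} z_p X_{ip} + \sum_{n\in N(i)} z_n P_{in}^s\right)$, then group the two sums over $P(i)$ to obtain $\frac{1}{\tau}\left(\sum_{p\in P(i)} z_p(P_{ip}^s - X_{ip}) + \sum_{n\in N(i)} z_n P_{in}^s\right)$, which is exactly equation \ref{L_i_sup_grad}.

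The computation is essentially routine, so I do not anticipate a genuine obstacle; the one subtlety to watch is that strictly speaking $\|z_i\|=1$ is a constraint, so the ``gradient'' here is the unconstrained gradient of the expression treating $z_i$ as a free vector (the projection onto the tangent space of the sphere is handled separately by the optimizer), and one should state this convention. The only bookkeeping hazard is keeping track of signs and of the exact index set $A(i)$ over which the softmax denominator runs versus the set $P(i)$ over which the outer average runs; getting the cancellation of the $1/|P(i)|$ factor right is what makes the positive-gradient term come out as $P_{ip}^s - X_{ip}$ rather than something messier.
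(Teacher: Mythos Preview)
Your proposal is correct and matches the standard approach: the paper itself does not give an independent proof of Lemma~1 but simply refers to Section~2 of the SupCon supplementary, and the computation there (as well as the paper's own proof of the analogous Lemma~2 for $L_i^{tcl}$) proceeds exactly as you outline---expand the log, differentiate the linear and log-sum-exp pieces separately, cancel the $|P(i)|$ factor, and split $A(i)$ into $P(i)\cup N(i)$. Your observation that the log-sum-exp term is independent of $p$ so the cancellation of $1/|P(i)|$ happens immediately is the same bookkeeping done in the paper's Lemma~2 proof, just stated a bit more cleanly.
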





Note that $A(i) \equiv P(i) \cup N(i)$ here. The authors further show in Section 3 of the supplementary \cite{Supcon} that the gradient from a positive while flowing back through the projector into the encoder reduces to almost zero for easy positives and $|P_{ip}^{s}-X_{ip}|$ for a hard positive because of the normalization consideration in the projection network. Similarly, the gradient from a negative reduces to almost zero for easy negatives and $|P_{in}^{s}|$ for a hard negative. We now present and analyse the following two limitations of the SupCon loss:
\begin{enumerate}
    \item \textbf{Implicit consideration of positives as negatives}: Having a closer look at the $L_{i}^{sup}$ (equation \ref{L_i_sup}) loss term reveals that the numerator inside the log function considers similarity with one positive $p$ at a time while the denominator consists of similarity terms of the anchor $i$ with all the positives in the batch — the set $P(i)$, thereby implicitly considering all the positives as negatives. A glance at the derivation of Lemma 1 in \cite{Supcon} clearly shows that this leads to the magnitude of the gradient response from a hard positive getting reduced to $|X_{ip}-P_{ip}^{s}|$ instead of simply $|X_{ip}|$. The term $P_{ip}^{s}$ consists of an exponential term in the numerator and thus can reduce the magnitude of $|X_{ip}-P_{ip}^{s}|$ considerably, especially because the temperature $\tau$ is generally chosen to be small. Note that the authors of \cite{Supcon} approximate the numerator of $P_{ip}^{s}$ to 1 while considering the magnitude of $|X_{ip}-P_{ip}^{s}|$ in their supplementary by assuming $z_{i}.z_{p} \approx 0$ for a hard positive which might not always be true. Another way to look at this limitation analytically is to observe the log part in the $L_{i}^{sup}$ term. For the loss term to decrease and ideally converge to close to zero, the numerator term inside the log function will encourage the anchor $z_{i}$ to pull the positive $z_{p}$ towards it while the denominator term will encourage it to push away the other positives present in $P(i)$ by some extent, thereby treating the other positives as negatives implicitly.

    \item \textbf{No possibility of regulating $P_{in}^{s}$}: \cite{SimCLR, Supcon} mention that performance in contrastive learning benefits from hard negatives and gradient contribution from hard negatives should be higher. It is easy to observe from equation \ref{P_in_s} that the magnitude of the gradient signal from a hard negative — $|P_{in}^{s}|$ in the SupCon loss decreases with batch size and the number of positives in the batch, and can become considerably small, especially since the denominator consists of similarity terms between the anchor and all the positives in the batch which are temperature scaled and exponentiated. This can limit the gradient contribution from hard negatives.
\end{enumerate}

\subsection{Tuned Contrastive Learning}

In this section, we present our novel contrastive loss function — \textbf{Tuned Contrastive Learning (TCL) Loss}. Note that our representation learning framework remains the same as that of Supervised Contrastive Learning discussed above. The TCL loss is given by the following equations:
\begin{equation}
    L^{tcl}= \sum_{i \in I}L_{i}^{tcl}
\end{equation}
\begin{equation}
    L_{i}^{tcl}=\frac{-1}{|P(i)|} \sum_{p \in P(i)} \text{log}(\hspace{1pt} \frac{\text{exp}(z_{i}.z_{p}/\tau)}{D(z_{i})} \hspace{1pt})
\label{L_i_tcl}
\end{equation}
where
\begin{equation}
    D(z_{i}) = \sum_{p' \in P(i)} \text{exp}(z_{i}.z_{p'}/\tau)+ k_{1}(\sum_{p' \in P(i)} \text{exp}(-z_{i}.z_{p'}))+k_{2}(\sum_{n \in N(i)} \text{exp}(z_{i}.z_{n}/\tau))
\label{D_zi}
\end{equation}
\begin{equation}
k_{1},k_{2}\geq1
\end{equation}

$k_{1}$ and $k_{2}$ are scalar parameters that are fixed before training. All other symbols have the same meaning as discussed in the previous section. We now present the following lemma:

\begin{lemma}

The gradient of the TCL loss per sample — $L_{i}^{tcl}$ with respect to the normalized projection network embedding $z_{i}$ is given by:

\begin{equation}
    \frac{\partial L_{i}^{tcl}}{\partial z_{i}} = \frac{1}{\tau} (\underbrace{\sum_{p \in P(i)} z_{p}(P_{ip}^{t}-X_{ip}-Y_{ip}^{t})}_\text{Gradient response from positives} + \underbrace{\sum_{n \in N(i)} z_{n}P_{in}^{t}}_\text{Gradient response from negatives})
\label{L_i_tcl_grad}
\end{equation}

where 
\begin{equation}X_{ip}= \frac{1}{|P(i)|}
\end{equation}
\begin{equation}
P_{ip}^{t} = \frac{\text{exp}(z_{i}.z_{p}/\tau)}{D(z_{i})}
\label{P_ip_t}
\end{equation}
\begin{equation}
    Y_{ip}^{t}=\frac{\tau k_{1} \text{exp}(-z_{i}.z_{p})}{D(z_{i})}
\label{Y_ip_t}
\end{equation}
\begin{equation}P_{in}^{t} = \frac{k_{2}\hspace{0.5pt}\text{exp}(z_{i}.z_{n}/\tau)}{D(z_{i})}
\label{P_in_t}
\end{equation}

\end{lemma}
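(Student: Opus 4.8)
The plan is to compute $\partial L_i^{tcl}/\partial z_i$ directly by differentiating the expression in equation \ref{L_i_tcl}, mirroring the structure of the proof of Lemma 1 in the supplementary of \cite{Supcon}. First I would split the per-sample loss as $L_i^{tcl} = \frac{-1}{|P(i)|}\sum_{p\in P(i)}\bigl(z_i\cdot z_p/\tau - \log D(z_i)\bigr)$, so that the first term contributes $\frac{-1}{|P(i)|}\sum_{p\in P(i)} z_p/\tau = -\sum_{p\in P(i)} z_p X_{ip}/\tau$, which will supply the $-X_{ip}$ piece inside the positive sum. The remaining work is entirely in differentiating $\log D(z_i)$, and since that term does not depend on the summation index $p$, the outer $\frac{-1}{|P(i)|}\sum_{p\in P(i)}$ just multiplies $+\frac{\partial}{\partial z_i}\log D(z_i)$ by $1$; I will need to redistribute this single gradient back over the positives and negatives to match the claimed per-term form.

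Next I would compute $\frac{\partial D(z_i)}{\partial z_i}$ term by term. The first group $\sum_{p'\in P(i)}\exp(z_i\cdot z_{p'}/\tau)$ differentiates to $\frac{1}{\tau}\sum_{p'\in P(i)} z_{p'}\exp(z_i\cdot z_{p'}/\tau)$; dividing by $D(z_i)$ gives $\frac{1}{\tau}\sum_{p'\in P(i)} z_{p'} P_{ip'}^{t}$, i.e. the $+P_{ip}^t$ contribution in the positive sum. The third group $k_2\sum_{n\in N(i)}\exp(z_i\cdot z_n/\tau)$ differentiates to $\frac{k_2}{\tau}\sum_{n\in N(i)} z_n\exp(z_i\cdot z_n/\tau)$, and dividing by $D(z_i)$ yields $\frac{1}{\tau}\sum_{n\in N(i)} z_n P_{in}^{t}$, which is exactly the negative gradient response. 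The novel middle group $k_1\sum_{p'\in P(i)}\exp(-z_i\cdot z_{p'})$ differentiates to $-k_1\sum_{p'\in P(i)} z_{p'}\exp(-z_i\cdot z_{p'})$ — note the absence of a $1/\tau$ factor here since this exponent is not temperature-scaled — so dividing by $D(z_i)$ and then pulling out the global $1/\tau$ that the lemma factors in front forces a compensating $\tau$, giving $-\frac{1}{\tau}\sum_{p'\in P(i)} z_{p'}\cdot\frac{\tau k_1\exp(-z_i\cdot z_{p'})}{D(z_i)} = -\frac{1}{\tau}\sum_{p'\in P(i)} z_{p'} Y_{ip'}^{t}$, matching the $-Y_{ip}^t$ term. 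Collecting the three pieces of $\frac{\partial}{\partial z_i}\log D(z_i) = \frac{1}{D(z_i)}\frac{\partial D(z_i)}{\partial z_i}$ and adding the $-X_{ip}$ contribution from the first part, then relabeling the dummy index $p'$ as $p$, reproduces equation \ref{L_i_tcl_grad}.

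The only genuine subtlety — and the step I would be most careful about — is the bookkeeping of the temperature factor: the paper factors an overall $1/\tau$ outside the parentheses in equation \ref{L_i_tcl_grad}, yet the $k_1$ term in $D(z_i)$ carries $\exp(-z_i\cdot z_{p'})$ with no $1/\tau$, so its contribution naturally comes with coefficient $1$ rather than $1/\tau$; absorbing it under the common $1/\tau$ prefactor is precisely why $Y_{ip}^{t}$ in equation \ref{Y_ip_t} acquires the extra $\tau k_1$ in its numerator. I would also note, for rigor, that because each $z_a$ is $\ell_2$-normalized the derivative $\partial(z_i\cdot z_a)/\partial z_i$ is taken treating $z_a$ as constant (the other embeddings are detached in this per-anchor gradient), exactly as in \cite{Supcon}; one can then, if desired, append the same easy-versus-hard positive/negative analysis (projecting onto the tangent space of the unit sphere at $z_i$) to interpret the magnitudes, but that is not needed for the statement of Lemma 2 itself. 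No convergence or inequality arguments are required — this is a direct chain-rule computation — so there is no real obstacle beyond careful algebra.
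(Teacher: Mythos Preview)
Your proposal is correct and follows essentially the same route as the paper's own proof: split the log, differentiate $D(z_i)$ term by term, note that $\log D(z_i)$ is independent of the outer summation index $p$ so the $\frac{1}{|P(i)|}\sum_{p}$ collapses, and relabel $p'\to p$. Your explicit remark on the $\tau$ bookkeeping for the $k_1$ term (and why $Y_{ip}^{t}$ carries a $\tau k_1$ numerator) is exactly the one non-obvious point in the calculation, and the paper handles it the same way implicitly in its chain of implications.
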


From Lemma 2, Theorem 1 and Theorem 2 follow in a straightforward fashion. The proofs for Lemma 2 and the two theorems are provided in our supplementary.

\newtheorem{theorem}{Theorem}
\begin{theorem}
For $k_{1},k_{2} \geq 1$, the magnitude of the gradient from a hard positive for TCL is strictly greater than the magnitude of the gradient from a hard positive for SupCon and hence, the following result follows:

\begin{equation}
     \underbrace{|X_{ip}-P_{ip}^{t}+Y_{ip}^{t}|}_\text{(TCL's hard positive gradient)} > \underbrace{|X_{ip}-P_{ip}^{s}|}_\text{(Supcon's hard positive gradient)}
\label{theorem1_eq}
\end{equation}

\end{theorem}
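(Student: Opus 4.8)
The plan is to reduce the desired inequality to a comparison of the two normalizers followed by a short sign check that lets me remove both absolute values. First I would write SupCon's normalizer explicitly as $D^{s}(z_{i}) := \sum_{a \in A(i)}\exp(z_{i}\cdot z_{a}/\tau) = \sum_{p'\in P(i)}\exp(z_{i}\cdot z_{p'}/\tau) + \sum_{n\in N(i)}\exp(z_{i}\cdot z_{n}/\tau)$, so that by (\ref{P_ip_s}) and (\ref{P_ip_t}) the quantities $P_{ip}^{s}$ and $P_{ip}^{t}$ share the \emph{same} numerator $\exp(z_{i}\cdot z_{p}/\tau)$ but have denominators $D^{s}(z_{i})$ and $D(z_{i})$ respectively. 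Subtracting the denominators using (\ref{D_zi}) gives $D(z_{i}) - D^{s}(z_{i}) = k_{1}\sum_{p'\in P(i)}\exp(-z_{i}\cdot z_{p'}) + (k_{2}-1)\sum_{n\in N(i)}\exp(z_{i}\cdot z_{n}/\tau)$, which is strictly positive whenever $k_{1},k_{2}\geq 1$: the $(k_{2}-1)$ sum is nonnegative and the $k_{1}$ sum is a nonempty sum of strictly positive exponentials scaled by $k_{1}\geq 1$. Hence $D(z_{i}) > D^{s}(z_{i})$, so $P_{ip}^{t} < P_{ip}^{s}$, i.e. $P_{ip}^{s}-P_{ip}^{t} > 0$.

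Combining this with $Y_{ip}^{t} = \tau k_{1}\exp(-z_{i}\cdot z_{p})/D(z_{i}) > 0$ from (\ref{Y_ip_t}), I obtain the unconditional estimate $(X_{ip}-P_{ip}^{t}+Y_{ip}^{t}) - (X_{ip}-P_{ip}^{s}) = (P_{ip}^{s}-P_{ip}^{t}) + Y_{ip}^{t} > 0$, that is, $X_{ip}-P_{ip}^{t}+Y_{ip}^{t} > X_{ip}-P_{ip}^{s}$. To promote this strict inequality of signed quantities to the claimed inequality of magnitudes in (\ref{theorem1_eq}), I next fix the sign of $X_{ip}-P_{ip}^{s}$ using the defining feature of a \emph{hard} positive, namely that its similarity with the anchor is small --- concretely, no larger than the average similarity over $P(i)$. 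Then $\exp(z_{i}\cdot z_{p}/\tau) \leq \frac{1}{|P(i)|}\sum_{p'\in P(i)}\exp(z_{i}\cdot z_{p'}/\tau) \leq \frac{1}{|P(i)|}D^{s}(z_{i})$, so $P_{ip}^{s} = \exp(z_{i}\cdot z_{p}/\tau)/D^{s}(z_{i}) \leq 1/|P(i)| = X_{ip}$, i.e. $X_{ip}-P_{ip}^{s}\geq 0$. This is precisely the regime in which SupCon's own analysis keeps $|X_{ip}-P_{ip}^{s}|$ as the surviving encoder-gradient magnitude, so no new assumption is being introduced.

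Finally I would assemble the pieces: from the two paragraphs above, $X_{ip}-P_{ip}^{t}+Y_{ip}^{t} > X_{ip}-P_{ip}^{s} \geq 0$, so both arguments of $|\cdot|$ in (\ref{theorem1_eq}) are nonnegative and the absolute values can be dropped, yielding $|X_{ip}-P_{ip}^{t}+Y_{ip}^{t}| = X_{ip}-P_{ip}^{t}+Y_{ip}^{t} > X_{ip}-P_{ip}^{s} = |X_{ip}-P_{ip}^{s}|$, which is exactly (\ref{theorem1_eq}). The step I expect to be the real obstacle --- as opposed to routine algebra --- is the middle one: making the notion of ``hard positive'' quantitatively precise enough to guarantee $X_{ip}\geq P_{ip}^{s}$, because for an \emph{easy} positive that inequality reverses and the magnitude comparison can genuinely fail. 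Everything else is driven by the single elementary fact $D(z_{i}) > D^{s}(z_{i})$, and, in contrast to the derivation in SupCon's supplementary, this route never requires the approximation $z_{i}\cdot z_{p}\approx 0$.
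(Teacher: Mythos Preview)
Your proposal is correct and follows essentially the same route as the paper's own proof: both rest on the two observations $P_{ip}^{t} < P_{ip}^{s}$ (via $D(z_{i}) > D^{s}(z_{i})$ when $k_{1},k_{2}\geq 1$) and $Y_{ip}^{t} > 0$, after which the paper simply declares that ``the result follows.'' Your only addition is the explicit sign check $X_{ip}\geq P_{ip}^{s}$ needed to drop the absolute values --- a step the paper leaves implicit in its appeal to SupCon's hard-positive analysis --- so your argument is a strict refinement of the paper's rather than a different approach.
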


\begin{theorem}
For fixed $k_{1}$, the magnitude of the gradient response from a hard negative for TCL loss — $P_{in}^{t}$ strictly increases with $k_{2}$.
\end{theorem}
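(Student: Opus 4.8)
The plan is to treat $P_{in}^{t}$ as an explicit function of $k_{2}$ with every other quantity held fixed, and then differentiate. First I would isolate the $k_{2}$-dependence of the denominator $D(z_{i})$ in \eqref{D_zi}: setting
$M := \sum_{p' \in P(i)} \exp(z_{i}\cdot z_{p'}/\tau) + k_{1}\sum_{p' \in P(i)} \exp(-z_{i}\cdot z_{p'})$
and
$E := \sum_{n' \in N(i)} \exp(z_{i}\cdot z_{n'}/\tau)$,
we have $D(z_{i}) = M + k_{2} E$, where $M > 0$ (already the first sum is strictly positive) and $E > 0$ (since $N(i)$ contains the hard negative $n$ and exponentials are positive); crucially $M$ does not involve $k_{2}$ because $k_{1}$ is fixed. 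Writing $b := \exp(z_{i}\cdot z_{n}/\tau) > 0$, equation \eqref{P_in_t} then reads $P_{in}^{t}(k_{2}) = \dfrac{b\, k_{2}}{M + k_{2} E}$.

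Next I would apply the quotient rule directly: $\dfrac{d}{dk_{2}} P_{in}^{t}(k_{2}) = \dfrac{b(M + k_{2} E) - b k_{2} E}{(M + k_{2} E)^{2}} = \dfrac{bM}{(M + k_{2} E)^{2}}$, which is strictly positive for every $k_{2} \geq 1$ because $b > 0$ and $M > 0$. Hence $P_{in}^{t}$ is strictly increasing in $k_{2}$. (Equivalently, for $k_{2}' > k_{2} \geq 1$ one can cross-multiply $\dfrac{b k_{2}'}{M + k_{2}' E}$ against $\dfrac{b k_{2}}{M + k_{2} E}$ and reduce to $b M (k_{2}' - k_{2}) > 0$, avoiding calculus entirely.) Finally, since $P_{in}^{t} > 0$ always holds, its magnitude equals its value, so the statement about the magnitude of the hard-negative gradient response follows; I would also note, in line with the discussion after Lemma~1, that the effective hard-negative gradient surviving back-propagation through the normalized projector is governed precisely by $|P_{in}^{t}|$, which this monotonicity result controls.

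I expect essentially no obstacle: the only point requiring care is cleanly separating the $k_{2}$-dependent term of $D(z_{i})$ from the rest and checking that the $k_{2}$-independent part $M$ is strictly positive (which holds regardless of the constraint $k_{1} \geq 1$, since the first sum in \eqref{D_zi} already ensures it). After that the conclusion is a one-line computation.
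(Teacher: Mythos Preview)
Your proposal is correct and follows essentially the same approach as the paper: both isolate the $k_{2}$-independent part of $D(z_{i})$ and then read off strict monotonicity. The only cosmetic difference is that the paper divides numerator and denominator by $k_{2}$ (so the denominator becomes $M/k_{2}+E$, visibly decreasing in $k_{2}$) rather than differentiating, but the content is identical.
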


\paragraph{Effects of $k_{1}$ and $k_{2}$} The authors of SupCon show (in equation 18 in the supplementary of \cite{Supcon}) that the magnitude of gradient response from a hard positive $|X_{ip}-P_{ip}^{s}|$ increases with the number of positives and negatives in the batch. This is basically a result of reducing the value of $P_{ip}^{s}$, a term that results from having positive similarity terms in the denominator of $L_{i}^{sup}$. But they approximate the numerator of $P_{ip}^{s}$ to 1 by assuming $z_{i}.z_{p} \approx 0$ for a hard positive which might not always be true (especially since $\tau$ is typically chosen to be small like 0.1). As evident from the proof of Theorem 1 in our supplementary, we further push this idea and reduce the value of $P_{ip}^{s}$ in SupCon loss to $P_{ip}^{t}$ in TCL loss by having an extra term in the denominator involving $k_{1}$ — $k_{1}(\sum_{p' \in P(i)} \text{exp}(-z_{i}.z_{p'}))$ and choosing a large enough value for $k_{1}$. Hence, it reduces the effect of implicit consideration of positives as negatives, the first limitation of SupCon loss discussed in the previous section. Note that having the extra term to increase the gradient response from hard positive is not the same as increasing the gradient response by amplifying the learning rate. This is because for the same and fixed learning rate, TCL loss increases the magnitude of the gradient signal over SupCon loss by changing the coefficient of $z_{p}$ in equation \ref{L_i_tcl_grad} which in turn means changing the gradient direction as well. This leads to consistently better performance as shown in the numerous experiments that we perform. Also, it directly follows from Theorem 2 that $k_{2}$ allows to regulate (increase) the gradient signal from a hard negative and thus, overcomes the second limitation of the SupCon loss.

\paragraph{Augmentation Strategy for Self-Supervised Setting} Since TCL loss can use multiple positives, we consider working with positive triplets instead of positive pairs in self-supervised settings. Given a batch $B$ with $N$ samples, we produce augmented batch $I$ of size $3N$ by producing three augmented views (positives) for each sample in $B$. This idea can further be extended in different ways to have more positives per anchor. For example, one can think of combining different augmentation strategies to produce multiple views per sample although we limit ourselves positive triplets in this work.
\section{Experiments}

We evaluate TCL in three stages: 1. Supervised setting, 2. Hyper-parameter stability and 3. Self-supervised setting. We then present empirical analysis on TCL loss's parameters — $k_{1}$ and $k_{2}$ and show how we choose their values. All the relevant training details are mentioned in our supplementary.

\subsection{Supervised Setting}
We start by evaluating TCL in supervised setting first. Since the authors of \cite{Supcon} mention that SupCon loss performs significantly better than triplet loss \cite{tripletloss} and N-pair loss \cite{NPair_Loss}, we directly compare TCL loss with SupCon and cross-entropy losses on various classification benchmarks including CIFAR-10, CIFAR-100 \cite{cifar}, Fashion MNIST (FMNIST) \cite{fmnist} and ImageNet-100 \cite{ImageNet}. The encoder network chosen is ResNet-50 \cite{resnet} for CIFAR and FMNIST datasets while Resnet-18 \cite{resnet} for ImageNet dataset (because of memory constraints). The representation vector is the activation of the final pooling layer of the encoder. ResNet-18 and ResNet-34 encoders give 512 dimensional representation vectors while ResNet-50 and above produce 2048 dimensional vectors. The projector network is a MLP with one hidden layer with size being 512 for ResNet-18 and Resnet-34, and 2048 for ResNet-50 and higher networks. The output layer of the projector MLP is 128 dimensional for all the networks. We use the same cross-entropy implementation as used by Supervised Contrastive Learning \cite{Supcon}.

\textbf{Note that for fair comparison of TCL with Supervised Contrastive Learning, we keep the architecture and all other possible hyper-parameters except the learning rate exactly the same. We also do hyper-parameter tuning significantly more for Supervised Contrastive Learning than for TCL}. As a result, we found that our re-implementation of Supervised Contrastive Learning gave better results than what is reported in the paper \cite{Supcon}. For example, on CIFAR-100 our significantly tuned version of SupCon achieves 79.1\% top-1 classification accuracy, 2.6\% more than what is reported in SupCon paper. As the authors of SupCon \cite{Supcon} mention that 200 epochs of contrastive training is sufficient for training a ResNet-50 on complete ImageNet dataset, our observations for the supervised setting case on relatively smaller datasets like CIFAR, FMNIST and ImageNet-100 are consistent with this finding. We train Resnet-50 (and ResNet-18) for a total of 150 epochs – 100 epochs of contrastive training for the encoder and the projector followed by 50 epochs of cross-entropy training for the linear layer. Note that 150 epochs of total training was sufficient for our re-implementation of SupCon loss to achieve better results than reported in the paper (2.6\% more on CIFAR-100 and 0.3\% more on CIFAR-10). We anyways still provide results for 250 epochs of training in our supplementary. As Table \ref{sup_table} shows, TCL loss consistently performs better than SupCon loss and outperforms cross-entropy loss on all the datasets. 

 \begin{table}[h!]
  \caption{Comparisons of top-1 accuracies of TCL with SupCon and cross-entropy loss in supervised settings. The values in parenthesis for SupCon denote the values presented in their paper.}
  \label{sup_table}
  \centering
  \begin{tabular}{lllll}
    \toprule
    Dataset & Cross-Entropy     & SupCon     & TCL \\
    \midrule
    CIFAR-10 & 95.0 & 96.3 (96.0) & 96.4     \\
    CIFAR-100 & 75.3 & 79.1 (76.5) & 79.8     \\
    FashionMNIST &  94.5&   95.5 &      95.7\\
    ImageNet-100 &  84.2 &   85.9 &  86.7   \\
    \bottomrule
  \end{tabular}
\end{table}

\subsection{Hyper-parameter Stability}

\begin{figure}
  \centering
  \begin{tabular}{cc}
  \includegraphics[width=6.5cm, height=5cm]{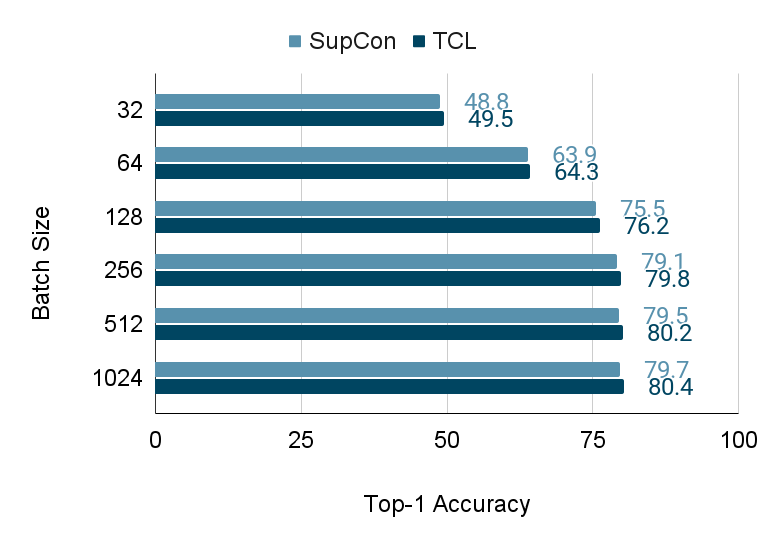} &
  \includegraphics[width=6.5cm, height=5cm]{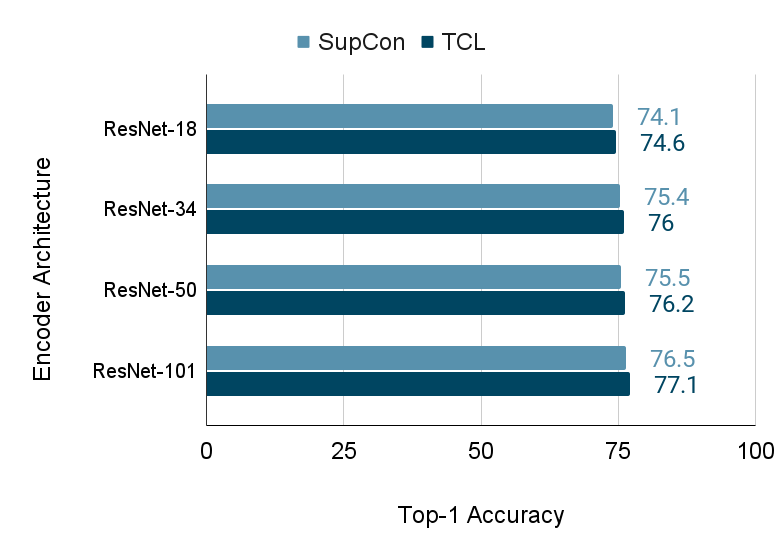}\\
  \includegraphics[width=6.5cm, height=5cm]{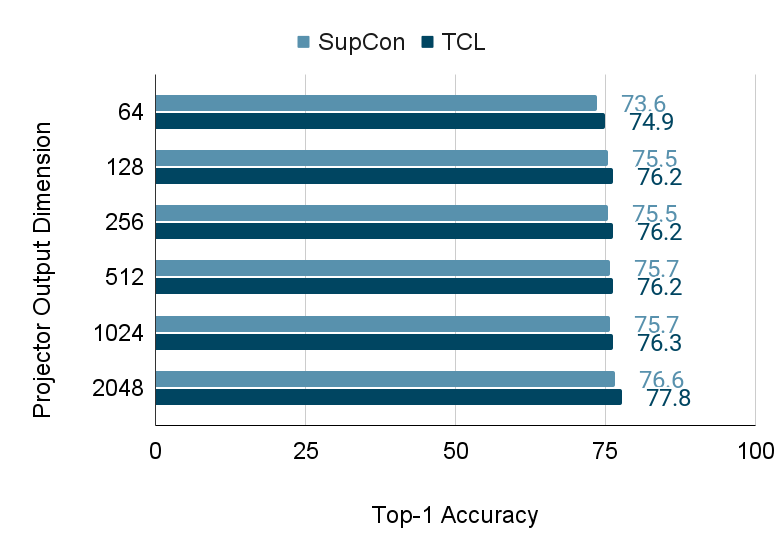} &
  \includegraphics[width=6.5cm, height=5cm]{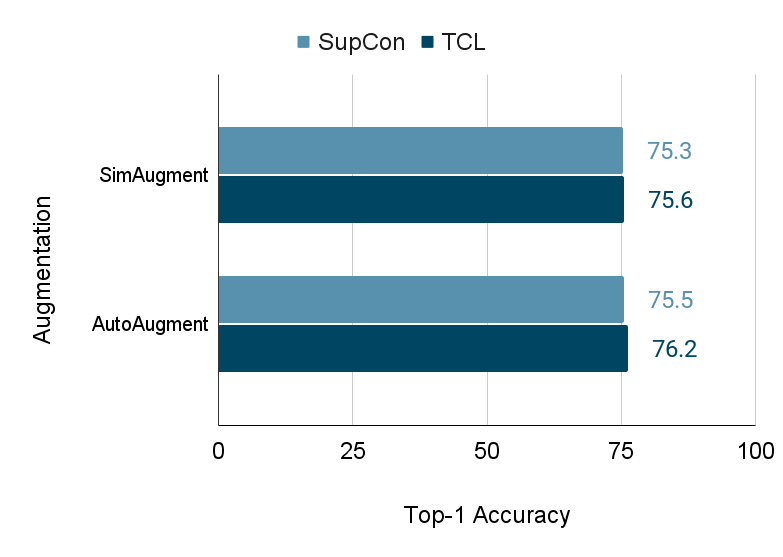}
  \end{tabular}
  \caption{SupCon vs TCL losses on a range of hyper-parameters. (a). batch size (top left) (b). encoder architecture (top right) (c). projector output dimensions/size (bottom left) (d). augmentation method (bottom right)}
  \label{fig_hparam}
\end{figure}

We now show the stability of TCL loss to a range of hyper-parameters. We compare TCL loss with SupCon loss on various hyper-parameters — encoder architectures, batch sizes, projection embedding sizes and different augmentations. For all the hyper-parameter experiments we choose CIFAR-100 as the common dataset (unless stated otherwise), set total training epochs to 150 (same as earlier section), temperature $\tau$ to 0.1 and use SGD optimizer with momentum=0.9 and weight decay=$1e-4$.

\subsubsection{Encoder Architecture}

We choose 4 encoder architectures of varying sizes- ResNet-18, ResNet-34, ResNet-50 and ResNet-101. For both TCL loss and SupCon loss, we choose batch size as 128 and AutoAugment \cite{autoaugment} data augmentation method. As evident from Fig. \ref{fig_hparam}-(b), TCL loss achieves consistent improvements in top-1 test classification accuracy over SupCon loss on all the architectures. We also tested TCL loss and SupCon loss on ImageNet-100 with ResNet-18 (batch size of 256) and ResNet-34 (batch size of 128). Using ResNet-18, TCL loss achieved 86.7\% top-1 accuracy while SupCon loss achieved 85.9\% top-1 accuracy. By switching to ResNet-34, TCL loss got 87.2\% top-1 accuracy while SupCon loss got 86.5\% top-1 accuracy.


\subsubsection{Batch Size}

For comparing TCL loss with SupCon loss on different batch sizes, we choose ResNet-50 as the encoder architecture and AutoAugment \cite{autoaugment} data augmentation. As evident from Fig. \ref{fig_hparam}-(a), we observe that TCL loss consistently performs better than SupCon loss on all batch sizes. All the batch sizes mentioned are after performing augmentation. Note that the authors of SupCon loss use an effective batch size of 256 (after augmentation) for CIFAR datasets in their released code\footnote{https://github.com/HobbitLong/SupContrast}. We select batch sizes equal to, smaller and greater than this value for comparison to demonstrate the effectiveness of Tuned Contrastive Learning.


\subsubsection{Projection Network Embedding (\texorpdfstring{$z_{i}$}{Lg}) Size}

In this section we analyse empirically how SupCon and TCL losses perform on various projection network output embedding sizes. This particular experiment was not explored as stated by the authors of Supervised Contrastive Learning \cite{Supcon}. ResNet-50 is the common encoder used with Auto-Augment \cite{autoaugment} data augmentation. As evident from Fig. \ref{fig_hparam}-(c), we observe that TCL loss achieves consistent improvements in top-1 test classification accuracy over SupCon loss for various projector output sizes. We observe that 64 performs the worst while 128, 256, 512 and 1024 give similar results. 2048 performs the best for both with TCL loss achieving 1.2\% higher accuracy than SupCon loss for this size.


\subsubsection{Augmentations}
We choose two augmentation strategies — AutoAugment and SimAugment for comparisons. AutoAugment\cite{autoaugment} is a two-stage augmentation policy trained with reinforcement learning and gives stronger (aggressive and diverse) augmentations. SimAugment \cite{SimCLR} is relatively a weaker augmentation strategy used in SimCLR that applies simple transformations like random flips, rotations, color jitters and gaussian blurring. We don't use gaussian blur in our implementation of SimAugment and train for 100 extra epochs i.e. 250 epochs while using it. Fig. \ref{fig_hparam}-(d) shows that TCL loss performs better than SupCon loss with both augmentations although, the gain is more with AutoAugment – the stronger augmentation strategy.

\subsection{Self-Supervised Setting}
In this section we evaluate TCL without any labels in self-supervised setting by making use of positive triplets as described earlier. We compare TCL with various SOTA SSL methods as shown in Table \ref{ssl_table}. The results for these methods are taken from the works of \cite{ARB}, \cite{solo}. The datasets used for comparison are CIFAR 10, CIFAR-100 and ImageNet-100. ResNet-18 is the common encoder used for every method. For CIFAR-10 and CIFAR-100 every method uses 1000 epochs of contrastive pre-training including TCL. For ImageNet-100, every method does 400 epochs of contrastive pre-training.
\begin{table}[h]
  \caption{Comparison of top-1 accuracy of TCL with various SSL methods. Values in bold show the best performing method.}
  \label{ssl_table}
  \centering
  \begin{tabular}{lllll}
    \toprule
    {Method} & {Projector Size} & CIFAR-10 & CIFAR-100     & ImageNet-100 \\
    \midrule
    BYOL\cite{BYOL} & 4096 &  92.6&   \textbf{70.2}& \textbf{80.1}      \\
    DINO\cite{dino} & 256 &  89.2&  66.4&   74.8\\
    SimSiam\cite{SimSiam} & 2048 &  90.5&  65.9 & 77.0     \\
    MOCO V2\cite{MOCO,MOCOV2} & 256 &  \textbf{92.9}&   69.5&   78.2   \\
    ReSSL\cite{ressl} & 256 &  90.6&   65.8&     76.6\\
    VICReg\cite{Vicreg} & 2048 &  90.1&   68.5&  79.2   \\
    SwAV\cite{Swav} & 256 &  89.2&   64.7&    74.3 \\
    W-MSE\cite{w_mse} & 256 &  88.2&   61.3&   69.1  \\
    ARB\cite{ARB} & 256 &  91.8&   68.2&     74.9\\
    ARB\cite{ARB} & 2048 &  92.2&   69.6&     79.5\\
    Barlow-Twins\cite{BarlowTwins} & 256 &  87.4&   57.9& 67.2    \\
    Barlow-Twins\cite{BarlowTwins} & 2048 &  89.6&   69.2& 78.6    \\
    SimCLR\cite{SimCLR} & 256 &  90.7&   65.5&   77.5  \\
    \textbf{TCL (Self-Supervised)} & 256 &  91.6&   66.7&  77.9   \\
    \bottomrule
    \textbf{TCL (Supervised)} & 128 &  95.8&  77.5&  86.7   \\
    \bottomrule
  \end{tabular}
\end{table}


Table \ref{ssl_table} shows the top-1 accuracy achieved by various methods on the three datasets. TCL performs consistently better than SimCLR \cite{SimCLR} and performs on par with various other methods. Note that methods like BYOL \cite{BYOL}, VICReg \cite{Vicreg}, ARB \cite{ARB} and Barlow-Twins \cite{BarlowTwins} use much larger projector size for output embedding and extra hidden layers in the projector MLP to get better performance while MOCO V2 \cite{MOCOV2} uses a queue size of 32,768 to get better results. Few of the methods like BYOL \cite{BYOL}, SimSiam \cite{SimSiam}, MOCO V2 \cite{MOCO,MOCOV2} also maintain two networks and hence, effectively use double the number of parameters and are memory intensive.

We also add the results of supervised TCL that can make use of labels as it is generalizable to any number of positives. Supervised TCL achieves significantly better results than all other SSL methods. SwAV does use a multi-crop strategy to create multiple augmentations but is not extended to supervised setting to use the labels \cite{Swav}.

\subsection{Analyzing and Choosing \texorpdfstring{$k_{1}$}{Lg} and \texorpdfstring{$k_{2}$}{Lg} for TCL}

\begin{figure}
  \centering
  \begin{tabular}{cc}
  \includegraphics[width=6.5cm, height=3.75cm, keepaspectratio, clip]{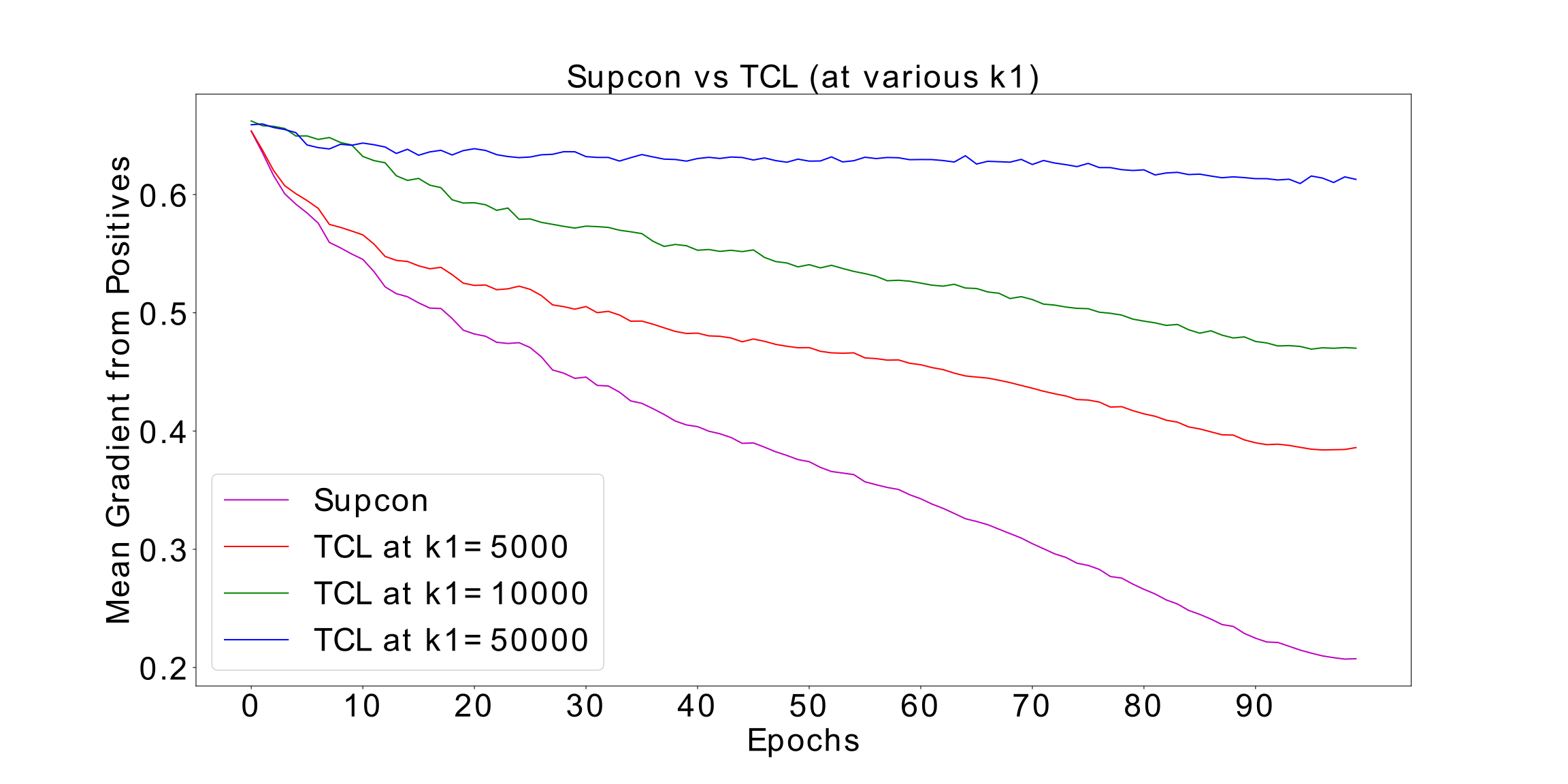} &
  \includegraphics[width=6.5cm, height=3.75cm, keepaspectratio, clip]{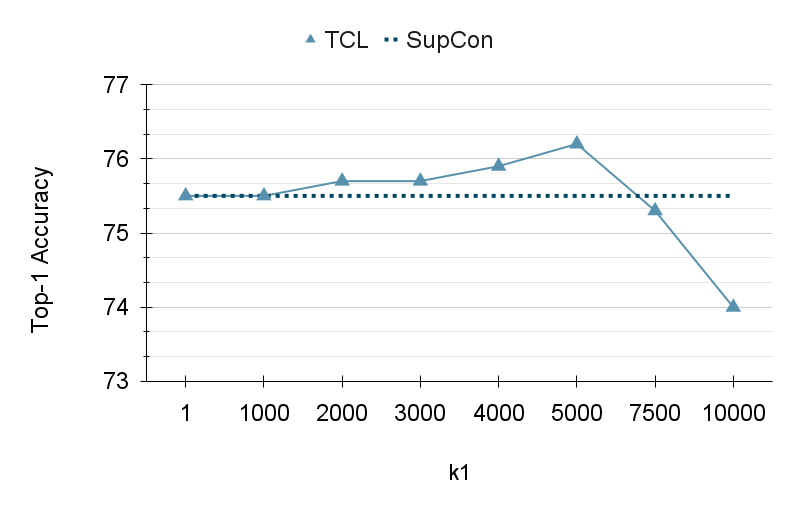}\\
  \includegraphics[width=6.5cm, height=3.75cm,keepaspectratio, clip]{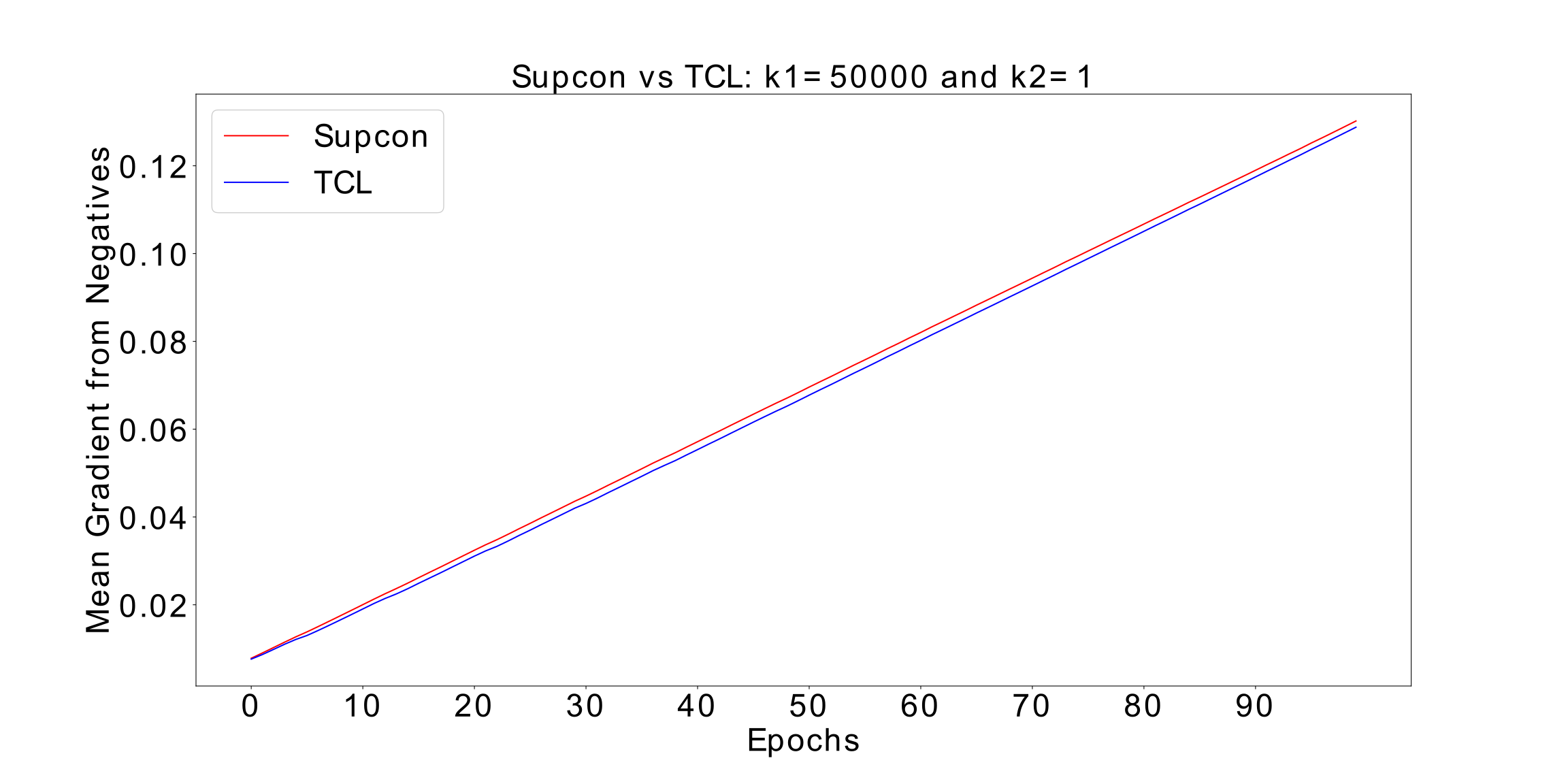} &
  \includegraphics[width=6.5cm, height=3.75cm, keepaspectratio, clip]{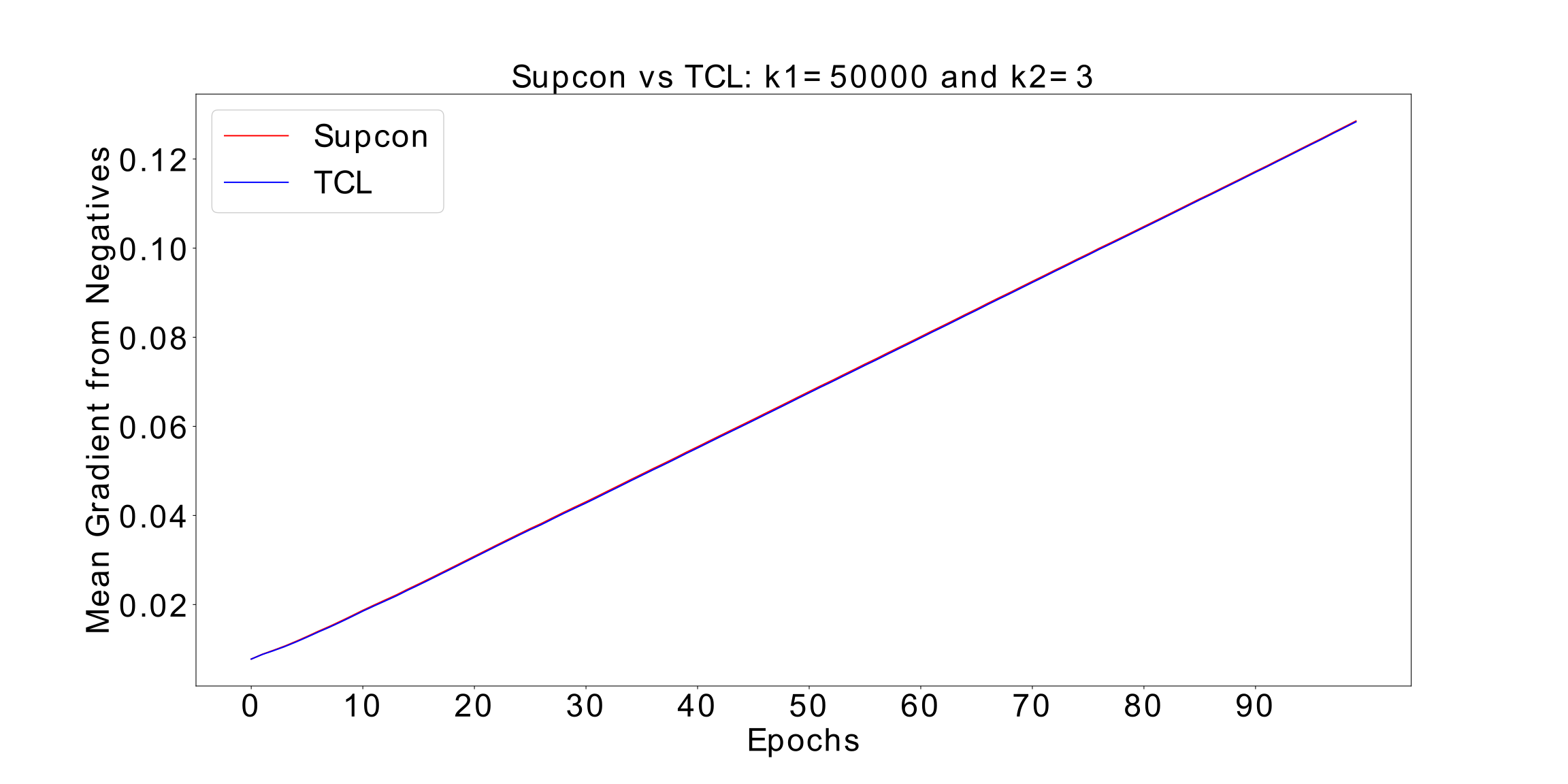}
  \end{tabular}
  \caption{Analysis of $k_{1}$ and $k_{2}$ (a). plot of mean gradient from positives for SupCon and TCL (at various values of $k_{1}$) (top left) (b). top-1 accuracy vs  $k_{1}$ on CIFAR-100 (top right) (c). plot of mean gradient from negatives for SupCon and TCL ($k_{1}=50000$ and $k_{2}=1$) (bottom left) (d). plot of mean gradient from negatives for SupCon and TCL ($k_{1}=50000$ and $k_{2}=3$) (bottom right)}
  \label{fig_k1k2}
\end{figure}
As we discussed earlier in Section 3.2, $k_{1}$ helps in increasing the magnitude of positive gradient from positives while $k_{2}$ helps in regulating (increasing) the gradient from negatives. We verify our claims empirically and show how we go about choosing their values for training.

\paragraph{Analyzing effects of $k_{1}$}We calculate the mean gradient from all positives (expressions from equation \ref{theorem1_eq}) per anchor averaged across the batch and plot the values for SupCon loss and TCL loss over the course of training of ResNet-50 on CIFAR-100 for 100 epochs. As evident from  Fig. \ref{fig_k1k2}-(a), increasing the value of $k_{1}$ increases the magnitude of gradient response from positives. We also analyze how this correlates with the top-1 accuracy in Fig. \ref{fig_k1k2}-(b). As we see for small values of $k_{1}$, the top-1 accuracy remains more or less the same as that of SupCon loss. As we increase it further, the gradient from positives increase leading to gains in top-1 accuracy. The top-1 accuracy reaches a peak and then starts to drop with further increase in $k_{1}$. We hypothesize that this drop is because very large values of $k_{1}$ start affecting the gradient response from negatives (equations \ref{P_in_t} and \ref{D_zi}). We verify this hypothesis while analyzing $k_{2}$.

\paragraph{Analyzing effects of $k_{2}$}We calculate the mean gradient from all negatives (expressions from equations \ref{P_in_s} and \ref{P_in_t}) per anchor averaged across the batch for the same setting as above and plot the values for SupCon loss and TCL loss. As we see in Fig. \ref{fig_k1k2}-(c), TCL loss's gradient lags behind SupCon loss's gradient by some margin for $k_{1}=50000$ and $k_{2}=1$. This value of $k_{1}$ actually leads to a top-1 accuracy of 71.8\%, a drop in performance. When we start increasing the value of $k_{2}$, the gradient response from negatives increase for TCL loss. Fig. \ref{fig_k1k2}-(d) shows that by increasing $k_{2}$ to 3 while $k_{1}=50000$, the gap between gradient (from negatives) curves of TCL loss and SupCon loss vanishes. We also observe that the top-1 accuracy increases back to 76.2\%, the best possible accuracy that we got for this setting.

\paragraph{Choosing $k_{1}$ and $k_{2}$}We observe that a value of $k_{1}$ in the range of $10^{3}$ to $10^{4}$ works the best with $k_{1}=$ $4 \times 10^{3}$ or $5 \times 10^{3}$ almost always working on all datasets and configurations we experimented with. We generally start with these two values or otherwise with $2 \times 10^{3}$ and increase it in steps of 2000 till $8\times10^{3}$. We also observed during our experiments that choosing any value less than $5 \times 10^{3}$ always gave improvements in performance over SupCon loss. For most of our experiments we set $k_{1}$ to $4 \times 10^{3}$ or $5 \times 10^{3}$ and get the desired performance boost in a single run. We found $k_{2}$ to be useful to compensate for the reduction in the value of $P_{in}^{t}$ caused by increasing $k_{1}$ and especially in self-supervised settings where hard negative gradient contribution is important. For setting $k_{2}$, we fix $k_{1}$ (which itself gives boost in performance) and increase $k_{2}$ in steps of 0.1 or 0.2 to see if we can get further improvement. We provide values for $k_{1}$ and $k_{2}$ for all our experiments in the supplementary. As we see, we generally keep $k_{2}=1$ for supervised settings but we do sometimes set it to a value slightly bigger than 1. We set $k_{2}$ to a higher value in self-supervised settings as compared to supervised settings to get higher gradient contribution from hard negatives. Increasing $k_{1}$ didn't help much in boosting the performance in self-supervised setting (as we only had two positives per anchor) and so we set it to 1. Increasing $k_{2}$ also increases the gradient response from positives to some extent by decreasing $P_{ip}^{t}$ (equation \ref{P_ip_t}) and so, we found it sufficient to increase only $k_{2}$ and set $k_{1}$ to 1 in self-supervised setting.

\section{Conclusion \& Limitations}
In this work, we have presented a novel contrastive loss function called Tuned Contrastive Learning (TCL) loss that generalizes to multiple positives and multiple negatives present in a batch and is applicable to both supervised and self-supervised settings. We showed mathematically how its gradient response to hard negatives and hard-positives is better than that of SupCon loss. We evaluated TCL loss in supervised and self-supervised settings and showed that it performs on par with existing state-of-the-art supervised and self-supervised learning methods. We also showed empirically the stability of TCL loss to a range of hyper-parameter settings.

A limitation of our work is that the proposed loss objective introduces two extra parameters $k_{1}$ and $k_{2}$, for which the values are chosen heuristically. Future direction can include works that come up with loss objectives that provide the properties of TCL loss out of the box without introducing any extra parameters.

\bibliography{arxiv_2023}
\bibliographystyle{plain}
\appendix

\section{Proofs for Theoretical Results}

\paragraph{Proof for Lemma 1:} Section 2 of the supplementary material of SupCon \cite{Supcon} gives a clear proof for Lemma 1 (refer to the derivation of $L_{out}^{sup}$ in that section).

\setcounter{lemma}{1}
\begin{lemma}
The gradient of the TCL loss per sample — $L_{i}^{tcl}$ with respect to the normalized projection network embedding $z_{i}$ is given by:

\begin{equation}
    \frac{\partial L_{i}^{tcl}}{\partial z_{i}} = \frac{1}{\tau} (\underbrace{\sum_{p \in P(i)} z_{p}(P_{ip}^{t}-X_{ip}-Y_{ip}^{t})}_\text{Gradient response from positives} + \underbrace{\sum_{n \in N(i)} z_{n}P_{in}^{t}}_\text{Gradient response from negatives})
\end{equation}

where 
\begin{equation}X_{ip}= \frac{1}{|P(i)|}
\end{equation}
\begin{equation}
P_{ip}^{t} = \frac{\text{exp}(z_{i}.z_{p}/\tau)}{D(z_{i})}
\end{equation}
\begin{equation}
    Y_{ip}^{t}=\frac{\tau k_{1} \text{exp}(-z_{i}.z_{p})}{D(z_{i})}
\end{equation}
\begin{equation}P_{in}^{t} = \frac{k_{2}\hspace{0.5pt}\text{exp}(z_{i}.z_{n}/\tau)}{D(z_{i})}\end{equation}

\end{lemma}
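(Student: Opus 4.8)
The plan is a direct application of the chain rule, organized to parallel the proof of Lemma~1 in \cite{Supcon} but accounting for the extra repulsion term in $D(z_i)$. First I would split the logarithm in \eqref{L_i_tcl}, writing $L_i^{tcl} = \frac{-1}{|P(i)|}\sum_{p\in P(i)}\big(\tfrac{z_i\cdot z_p}{\tau} - \log D(z_i)\big)$. Since $D(z_i)$ does not depend on the summation index $p$, the collection of $\log D(z_i)$ terms collapses to a single $\log D(z_i)$, leaving $L_i^{tcl} = \log D(z_i) - \frac{1}{\tau|P(i)|}\sum_{p\in P(i)} z_i\cdot z_p$. The linear part differentiates immediately, and using $X_{ip}=1/|P(i)|$ it contributes $-\frac{1}{\tau}\sum_{p\in P(i)} X_{ip}\, z_p$, i.e.\ the $-X_{ip}$ piece of the claimed positive response.

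The crux is evaluating $\frac{\partial}{\partial z_i}\log D(z_i) = \frac{1}{D(z_i)}\frac{\partial D(z_i)}{\partial z_i}$ termwise over the three groups of summands in \eqref{D_zi}. The positive-similarity terms $\exp(z_i\cdot z_{p'}/\tau)$ differentiate to $\frac{1}{\tau} z_{p'}\exp(z_i\cdot z_{p'}/\tau)$, contributing $\frac{1}{\tau}\sum_{p'\in P(i)} z_{p'}\, P_{ip'}^{t}$ after dividing by $D(z_i)$ and using \eqref{P_ip_t}. The repulsion terms $k_1\exp(-z_i\cdot z_{p'})$ differentiate to $-k_1 z_{p'}\exp(-z_i\cdot z_{p'})$; crucially there is no $1/\tau$ factor here, which is exactly why the definition of $Y_{ip}^{t}$ in \eqref{Y_ip_t} carries a compensating $\tau$ in its numerator, so that dividing by $D(z_i)$ and matching to \eqref{Y_ip_t} yields $-\frac{1}{\tau}\sum_{p'\in P(i)} z_{p'}\, Y_{ip'}^{t}$. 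Finally the negative terms $k_2\exp(z_i\cdot z_n/\tau)$ differentiate to $\frac{k_2}{\tau} z_n\exp(z_i\cdot z_n/\tau)$, giving $\frac{1}{\tau}\sum_{n\in N(i)} z_n\, P_{in}^{t}$ after dividing by $D(z_i)$ and using \eqref{P_in_t}.

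Collecting the four contributions and factoring out $1/\tau$ then reproduces \eqref{L_i_tcl_grad} exactly: each positive $z_p$ acquires coefficient $P_{ip}^{t}-X_{ip}-Y_{ip}^{t}$ and each negative $z_n$ acquires coefficient $P_{in}^{t}$. As in \cite{Supcon}, I would treat $z_i$ as a free vector throughout the differentiation, deferring the effect of the unit-norm constraint (the projection onto the tangent space of the sphere, which is what suppresses the easy-positive and easy-negative contributions) to the subsequent discussion, so no Lagrange-multiplier bookkeeping enters the lemma itself. I do not anticipate a genuine obstacle — the computation is elementary calculus — and the only points demanding care are the asymmetric $1/\tau$ scaling of the $k_1$ term relative to the other two (hence the $\tau$ in $Y_{ip}^{t}$), and the collapse of the $|P(i)|$ copies of $\log D(z_i)$ to a single term after the split of the logarithm.
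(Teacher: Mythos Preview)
Your proposal is correct and follows essentially the same route as the paper's proof: split the logarithm, differentiate $D(z_i)$ termwise, and regroup coefficients on $z_p$ and $z_n$. The only cosmetic difference is that you collapse the $|P(i)|$ identical copies of $\log D(z_i)$ \emph{before} differentiating, whereas the paper carries the outer sum $\sum_{p\in P(i)}$ through the derivative and only cancels the factor $|P(i)|$ at the end; the two orderings are trivially equivalent and your version is marginally cleaner.
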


\begin{proof}
\begin{equation}
    L_{i}^{tcl}=\frac{-1}{|P(i)|} \sum_{p \in P(i)} \text{log}(\hspace{1pt} \frac{\text{exp}(z_{i}.z_{p}/\tau)}{D(z_{i})} \hspace{1pt})
\end{equation}
\begin{equation}
    \implies L_{i}^{tcl}=\frac{-1}{|P(i)|} \sum_{p \in P(i)}  (\frac{z_{i}.z_{p}}{\tau} - \text{log}(D(z_{i})) 
\end{equation}

\begin{equation}
\begin{aligned}
    \implies \frac{\partial L_{i}^{tcl}}{\partial z_{i}}=\frac{-1}{\tau |P(i)|} \sum_{p \in P(i)}  \bigg( z_{p} - \frac{(\sum_{p' \in P(i)} z_{p'}\text{exp}(z_{i}.z_{p'}/\tau)}{D(z_{i})}\\
    + \frac{\tau k_{1}(\sum_{p' \in P(i)} z_{p'}\text{exp}(-z_{i}.z_{p'}))}{D(z_{i})}- \frac{ k_{2}(\sum_{n \in N(i)} z_{n}\text{exp}(z_{i}.z_{n}/\tau))}{D(z_{i})} \bigg)
\end{aligned}
\end{equation}

\begin{equation}
\begin{aligned}
    \implies \frac{\partial L_{i}^{tcl}}{\partial z_{i}}=\frac{-1}{\tau |P(i)|} \Bigg[ \sum_{p \in P(i)} z_{p}
    - \sum_{p \in P(i)}\frac{(\sum_{p' \in P(i)} z_{p'}\text{exp}(z_{i}.z_{p'}/\tau))}{D(z_{i})}\\
    + \sum_{p \in P(i)}\frac{\tau k_{1}(\sum_{p' \in P(i)} z_{p'}\text{exp}(-z_{i}.z_{p'}))}{D(z_{i})}
    - \sum_{p \in P(i)} \frac{ k_{2}(\sum_{n \in N(i)} z_{n}\text{exp}(z_{i}.z_{n}/\tau))}{D(z_{i})} \Bigg]
\end{aligned}
\end{equation}

\begin{equation}
\begin{aligned}
    \implies \frac{\partial L_{i}^{tcl}}{\partial z_{i}}=\frac{-1}{\tau |P(i)|} \Bigg[ \sum_{p \in P(i)}  z_{p}
    - \sum_{p' \in P(i)}\frac{(\sum_{p \in P(i)} z_{p'}\text{exp}(z_{i}.z_{p'}/\tau))}{D(z_{i})}\\
    + \sum_{p' \in P(i)}\frac{\tau k_{1}(\sum_{p \in P(i)} z_{p'}\text{exp}(-z_{i}.z_{p'}))}{D(z_{i})}
    - \sum_{p \in P(i)} \frac{ k_{2}(\sum_{n \in N(i)} z_{n}\text{exp}(z_{i}.z_{n}/\tau))}{D(z_{i})} \Bigg]
\end{aligned}
\end{equation}

\begin{equation}
\begin{aligned}
    \implies \frac{\partial L_{i}^{tcl}}{\partial z_{i}}=\frac{-1}{\tau |P(i)|} \Bigg[ \sum_{p \in P(i)}   z_{p}
    - \sum_{p' \in P(i)}\frac{(|P(i)| z_{p'}\text{exp}(z_{i}.z_{p'}/\tau))}{D(z_{i})}\\
    + \sum_{p' \in P(i)}\frac{\tau k_{1}(|P(i)| z_{p'}\text{exp}(-z_{i}.z_{p'}))}{D(z_{i})}
    - \frac{ |P(i)| k_{2}(\sum_{n \in N(i)} z_{n}\text{exp}(z_{i}.z_{n}/\tau))}{D(z_{i})} \Bigg]
\end{aligned}
\end{equation}

\begin{equation}
\begin{aligned}
    \implies \frac{\partial L_{i}^{tcl}}{\partial z_{i}}=\frac{-1}{\tau |P(i)|} \Bigg[ \sum_{p \in P(i)}  z_{p}
    - \sum_{p \in P(i)}\frac{(|P(i)| z_{p}\text{exp}(z_{i}.z_{p}/\tau))}{D(z_{i})}\\
    + \sum_{p \in P(i)}\frac{\tau k_{1}(|P(i)| z_{p}\text{exp}(-z_{i}.z_{p}))}{D(z_{i})}
    - \frac{ |P(i)| k_{2}(\sum_{n \in N(i)} z_{n}\text{exp}(z_{i}.z_{n}/\tau))}{D(z_{i})} \Bigg]
\end{aligned}
\end{equation}

\begin{equation}
\begin{aligned}
    \implies \frac{\partial L_{i}^{tcl}}{\partial z_{i}}=\frac{-1}{\tau} \Bigg[ \sum_{p \in P(i)}  \frac{z_{p}}{|P(i)|}
    - \sum_{p \in P(i)}\frac{(z_{p}\text{exp}(z_{i}.z_{p}/\tau))}{D(z_{i})}\\
    + \sum_{p \in P(i)}\frac{\tau k_{1}(z_{p}\text{exp}(-z_{i}.z_{p}))}{D(z_{i})}
    - \frac{ k_{2}(\sum_{n \in N(i)} z_{n}\text{exp}(z_{i}.z_{n}/\tau))}{D(z_{i})} \Bigg]
\end{aligned}
\end{equation}

\begin{equation}
\begin{aligned}
    \implies \frac{\partial L_{i}^{tcl}}{\partial z_{i}}=\frac{1}{\tau} \Bigg[ \sum_{p \in P(i)}  z_{p} \bigg(
    \frac{\text{exp}(z_{i}.z_{p}/\tau)}{D(z_{i})}
    -\frac{1}{|P(i)|}
    - \frac{\tau k_{1}\text{exp}(-z_{i}.z_{p})}{D(z_{i})} \bigg)\\
    + \sum_{n \in N(i)} z_{n}  \frac{ k_{2} \text{exp}(z_{i}.z_{n}/\tau)}{D(z_{i})} \Bigg]
\end{aligned}
\end{equation}
This completes the proof.
\end{proof}

\setcounter{theorem}{0}
\begin{theorem}
For $k_{1},k_{2} \geq 1$, the magnitude of the gradient from a hard positive for TCL is strictly greater than the magnitude of the gradient from a hard positive for SupCon and hence, the following result follows:

\begin{equation}
     \underbrace{|X_{ip}-P_{ip}^{t}+Y_{ip}^{t}|}_\text{(TCL's hard positive gradient)} > \underbrace{|X_{ip}-P_{ip}^{s}|}_\text{(Supcon's hard positive gradient)}
\end{equation}

\end{theorem}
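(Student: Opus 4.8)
The plan is to reduce the claimed inequality to a comparison of the scalar coefficients multiplying $z_p$ in the positive parts of the two gradient formulas (Lemma~1 for SupCon, Lemma~2 for TCL), and then to chain a few elementary inequalities among the quantities $X_{ip}-P_{ip}^s$, $X_{ip}-P_{ip}^t$ and $X_{ip}-P_{ip}^t+Y_{ip}^t$. The one structural fact doing all the work is that the TCL normalizer $D(z_i)$ strictly dominates the SupCon normalizer $S(z_i):=\sum_{a\in A(i)}\exp(z_i\cdot z_a/\tau)$.

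First I would write $D(z_i)=S(z_i)+k_1\sum_{p'\in P(i)}\exp(-z_i\cdot z_{p'})+(k_2-1)\sum_{n\in N(i)}\exp(z_i\cdot z_n/\tau)$. Since $k_1\geq 1$ and $|P(i)|\geq 1$, the middle term is a sum of strictly positive exponentials, hence $>0$; since $k_2\geq 1$ the last term is $\geq 0$. Therefore $D(z_i)>S(z_i)$, and because $P_{ip}^s$ and $P_{ip}^t$ share the same positive numerator $\exp(z_i\cdot z_p/\tau)$, this gives $0<P_{ip}^t<P_{ip}^s$ immediately. Second, I would invoke the hard-positive hypothesis exactly as in the SupCon supplementary: a hard positive is one whose similarity to the anchor is at most the mean similarity over $P(i)$, so $P_{ip}^s<\exp(z_i\cdot z_p/\tau)\big/\sum_{p'\in P(i)}\exp(z_i\cdot z_{p'}/\tau)\leq 1/|P(i)|=X_{ip}$, where the strict first step uses $N(i)\neq\emptyset$. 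Combining these, $X_{ip}-P_{ip}^t=(X_{ip}-P_{ip}^s)+(P_{ip}^s-P_{ip}^t)>X_{ip}-P_{ip}^s>0$, and since $Y_{ip}^t=\tau k_1\exp(-z_i\cdot z_p)/D(z_i)>0$ we obtain the chain $X_{ip}-P_{ip}^t+Y_{ip}^t>X_{ip}-P_{ip}^t>X_{ip}-P_{ip}^s>0$. All three quantities being positive, the absolute values may be dropped; since the coefficient of $z_p$ in Lemma~2 is $P_{ip}^t-X_{ip}-Y_{ip}^t$, whose magnitude is exactly $|X_{ip}-P_{ip}^t+Y_{ip}^t|$, the displayed inequality of Theorem~1 follows (the common factor $\tfrac1\tau$ and the unit norm of $z_p$ cancel in the comparison).

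The step I expect to require the most care is pinning down the hard-positive hypothesis precisely enough to license $X_{ip}-P_{ip}^s>0$ rather than arguing it heuristically; this is the same modeling assumption SupCon relies on, so I would state it explicitly and import it. A secondary point worth attention is strictness at the boundary $k_1=k_2=1$: the inequality remains strict there precisely because $k_1\sum_{p'}\exp(-z_i\cdot z_{p'})$ is a sum of strictly positive terms, so $D(z_i)>S(z_i)$ strictly, and because $Y_{ip}^t>0$ strictly — either of these two gains alone already suffices, and they reinforce one another.
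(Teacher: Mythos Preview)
Your proposal is correct and follows essentially the same route as the paper: both arguments rest on the two observations $P_{ip}^{t}<P_{ip}^{s}$ (because $D(z_i)>S(z_i)$ once $k_1,k_2\ge 1$) and $Y_{ip}^{t}>0$, together with the hard-positive sign condition $X_{ip}-P_{ip}^{s}>0$ inherited from the SupCon analysis. Your write-up is more explicit than the paper's — you spell out the decomposition of $D(z_i)$, state the hard-positive hypothesis precisely, and chain the inequalities — whereas the paper simply asserts $P_{ip}^{t}<P_{ip}^{s}$ and $Y_{ip}^{t}>0$ and declares the result; but the underlying argument is identical.
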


\begin{proof}
As the authors of \cite{Supcon} show in Section 3 of their supplementary (we also mention the same in our main paper in Section 3.1) that the gradient from a positive while flowing back through the projector into the encoder reduces to almost zero for easy positives and $|P_{ip}^{s}-X_{ip}|$ for a hard positive because of the normalization consideration in the projection network combined with the assumption that $z_{i}.z_{p} \approx 1$ for easy positives and $z_{i}.z_{p} \approx 0$ for hard positives. Proceeding in a similar manner, it is straightforward to see that the gradient response from a hard positive in case of TCL is $|P_{ip}^{t}-X_{ip}-Y_{ip}^{t}|$. We don't prove this explicitly again since the derivation will be identical to what authors \cite{Supcon} have already shown. One can refer section 3 of the supplementary of \cite{Supcon} for details.\\

Now, because $k_{1},k_{2} \geq 1$, it is easy to observe from equations 5 and 13 of our main paper that,

\begin{equation}
P_{ip}^{t} < P_{ip}^{s}
\end{equation}
And from equation 14 of our main paper:
\begin{equation}
     Y_{ip}^{t} > 0 
\end{equation}

Hence, the result follows. This completes the proof.
\end{proof}

\begin{theorem}
For fixed $k_{1}$, the magnitude of the gradient response from a hard negative for TCL — $P_{in}^{t}$ increases strictly with $k_{2}$.
\end{theorem}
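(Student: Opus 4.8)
The plan is to treat $P_{in}^{t}$ as an explicit function of $k_{2}$ with all embeddings $z_{i}, z_{p'}, z_{n'}$ and the parameter $k_{1}$ held fixed, and to exhibit it as a ratio whose monotonicity in $k_{2}$ is immediate. First I would split the denominator $D(z_{i})$ from equation \ref{D_zi} into the part that does not involve $k_{2}$ and the part that does: set $A := \sum_{p' \in P(i)} \text{exp}(z_{i}.z_{p'}/\tau) + k_{1}\sum_{p' \in P(i)} \text{exp}(-z_{i}.z_{p'})$, $B := \sum_{n' \in N(i)} \text{exp}(z_{i}.z_{n'}/\tau)$, and $C := \text{exp}(z_{i}.z_{n}/\tau)$, so that by equation \ref{P_in_t} we have $P_{in}^{t} = k_{2}C/(A + k_{2}B)$. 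Since each summand is an exponential and hence strictly positive, and since $P(i)$ is always nonempty, we get $A > 0$; likewise $C > 0$ and $B \geq 0$, with $B > 0$ whenever $N(i) \neq \emptyset$. If $N(i) = \emptyset$ there is no hard negative and the statement is vacuous, so I may assume $B > 0$.

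Next I would establish the monotonicity. The cleanest route is to rewrite $P_{in}^{t} = C/(A/k_{2} + B)$ for $k_{2} > 0$; since $A > 0$, the map $k_{2} \mapsto A/k_{2} + B$ is strictly decreasing on $(0,\infty)$, so its reciprocal is strictly increasing, and multiplying by the positive constant $C$ preserves this. Equivalently, one can differentiate directly and obtain $\partial P_{in}^{t}/\partial k_{2} = CA/(A + k_{2}B)^{2} > 0$. Either way, $P_{in}^{t}$ is strictly increasing in $k_{2}$ on the relevant range $k_{2} \geq 1$. Finally, because $P_{in}^{t} > 0$ always, its magnitude coincides with its value, and by the argument used in the proof of Theorem 1 (mirroring Section 3 of the supplementary of \cite{Supcon}) the magnitude of the hard-negative gradient response reduces to $|P_{in}^{t}|$; hence it strictly increases with $k_{2}$.

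I do not expect a genuine obstacle here. The only point that needs care is recognizing that $k_{2}$ appears in \emph{both} the numerator and the denominator of $P_{in}^{t}$ (the latter through $D(z_{i})$), so the dependence is not a naive linear scaling; the elementary algebra above settles it at once. A secondary bookkeeping point is the degenerate case $N(i) = \emptyset$, which I would dispose of in one line as noted.
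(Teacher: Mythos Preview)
Your proposal is correct and follows essentially the same approach as the paper: both arguments divide numerator and denominator of $P_{in}^{t}$ by $k_{2}$ to obtain the form $\text{exp}(z_{i}.z_{n}/\tau)/\big(A/k_{2}+B\big)$ and then read off strict monotonicity from $A>0$. You simply add a bit more bookkeeping (explicit positivity of $A,B,C$, the vacuous case $N(i)=\emptyset$, and the derivative check), none of which changes the route.
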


\begin{proof}
\begin{equation}
P_{in}^{t} = \frac{k_{2}\hspace{0.5pt}\text{exp}(z_{i}.z_{n}/\tau)}{D(z_{i})}
\end{equation}

\begin{equation}
= \frac{k_{2}\hspace{0.5pt}\text{exp}(z_{i}.z_{n}/\tau)}{\sum_{p' \in P(i)} \text{exp}(z_{i}.z_{p'}/\tau)+ k_{1}(\sum_{p' \in P(i)} \text{exp}(-z_{i}.z_{p'}))+k_{2}(\sum_{n \in N(i)} \text{exp}(z_{i}.z_{n}/\tau))}
\end{equation}

\begin{equation}
= \frac{\hspace{0.5pt}\text{exp}(z_{i}.z_{n}/\tau)}{\big(\sum_{p' \in P(i)} \text{exp}(z_{i}.z_{p'}/\tau)+ k_{1}(\sum_{p' \in P(i)} \text{exp}(-z_{i}.z_{p'})) \big)/k_{2} +(\sum_{n \in N(i)} \text{exp}(z_{i}.z_{n}/\tau))}
\end{equation}

It is now easy to observe that for a fixed $k_{1}$, $P_{in}^{t}$ increases strictly with $k_{2}$. This completes the proof.
\end{proof}

\section{Training Details}

\subsection{Supervised Setting}
We first present the common training details used for each dataset experiment in the supervised setting for SupCon \cite{Supcon} and TCL. Except for the contrastive training learning rate, every other detail presented is common for SupCon and TCL. As mentioned in our main paper, we train for a total of 150 epochs which involves 100 epochs of contrastive training for the encoder and the projector, and 50 epochs of cross-entropy training for the linear layer for both the losses. AutoAugment \cite{autoaugment} is the common data augmentation method used except for FMNIST \cite{fmnist} for which we used a simple augmentation strategy consisting of random cropping and horizontal flip. We use cosine annealing based learning rate scheduler and SGD optimizer with momentum=0.9 and weight decay=$1e-4$ for both contrastive and linear layer training. Temperature $\tau$ is set to 0.1. For linear layer training, the starting learning rate is $5e-1$. ResNet-50 \cite{resnet} is the common encoder architecture used. We use NVIDIA-GeForce-RTX-2080-Ti, NVIDIA-TITAN-RTX and NVIDIA-A100-SXM4-80GB GPUs for our experiments.

\paragraph{CIFAR-10 \cite{cifar}} Image size is resized to $32\times32$ in the data augmentation pipeline. We use a batch size of 128. For both SupCon and TCL we use a starting learning rate of $1e-1$ for contrastive training. We set $k_{1}=5000$ and $k_{2}=1$ for TCL.

\paragraph{CIFAR-100 \cite{cifar}} Image size is resized to $32\times32$ in the data augmentation pipeline. We use a batch size of 256. For both SupCon and TCL we use a starting learning rate of $2e-1$ for contrastive training. We set $k_{1}=4000$ and $k_{2}=1$ for TCL.

\paragraph{FMNIST \cite{fmnist}} Image size is resized to $28\times28$ in the data augmentation pipeline. We use a batch size of 128. For both SupCon and TCL we use a starting learning rate of $9e-2$ for contrastive training. We set $k_{1}=5000$ and $k_{2}=1$ for TCL.

\paragraph{ImageNet-100 \cite{ImageNet}} Images are resized to $224\times224$ in the data-augmentation pipeline and batch size of 256 is used. For SupCon we use a starting learning rate of $2e-1$ for contrastive training while $3e-1$ for TCL. We set $k_{1}=4000$ and $k_{2}=1$ for TCL.

For CIFAR-100 dataset and batch size of 128, we also ran the experiment 30 times to get 95\% confidence intervals for top-1 accuracies of SupCon and TCL. For SupCon we got $74.79 \pm 0.23$ while for TCL we got $75.72 \pm 0.16$ as the confidence intervals. We also present results for 250 epochs of training constituted by 200 epochs of contrastive training and 50 epochs of linear layer training in Table \ref{sup_table_250}. As we see, TCL performs consistently better than SupCon \cite{Supcon}. Note that we didn't see any performance improvement for FMNIST dataset for either SupCon or TCL by running them for 250 epochs.

 \begin{table}[h!]
  \caption{Comparisons of top-1 accuracies of TCL with SupCon in supervised setting for 250 epochs of training.}
  \label{sup_table_250}
  \centering
  \begin{tabular}{lllll}
    \toprule
    Dataset & SupCon     & TCL \\
    \midrule
    CIFAR-10 &  96.7&  96.8    \\
    CIFAR-100 & 81.0 & 81.6     \\
    FashionMNIST &  95.5   & 95.7    \\
    ImageNet-100 &   86.5 &  87.1   \\
    \bottomrule
  \end{tabular}
\end{table}

\subsection{Hyper-parameter Stability}

For the hyper-parameter stability experiments we have presented most of the details in the main paper. We present the learning rates and values of $k_{1}$ and $k_{2}$ used for TCL. Remaining details are the same as the supervised setting experiments.

\subsubsection{Encoder Architecture}

The starting learning rate for contrastive training is $1e-1$ for all the encoders except ResNet-101 for which we used a value of $9e-2$. $k_{1}=5000$ and $k_{2}=1$ are the common values used for all the encoders.

\subsubsection{Batch Size}

For batch sizes=32, 64, 128, 256, 512 and 1024 we set the starting learning rates for contrastive training to $8e-3$, $9e-3$, $1e-1$, $2e-1$, $5e-1$ and $1$ respectively. For batch size of 32 we used $k_{1}=5000$ and $k_{2}=1$. For batch size of 64 we used $k_{1}=7500$ and $k_{2}=1$. For batch size of 128 we used $k_{1}=5000$ and $k_{2}=1$. For batch sizes of 256, 512 and 1024 we used $k_{1}=4000$ and $k_{2}=1$.

\subsubsection{Projection Network Embedding (\texorpdfstring{$z_{i}$}{Lg}) Size}

We used a common starting learning rate of $1e-1$ with $k_{1}=5000$ and $k_{2}=1$ for all the projector output sizes.

\subsubsection{Augmentations}
For AutoAugment \cite{autoaugment} method, we use a learning rate of $1e-1$ with $k_{1}=5000$ and $k_{2}=1$. For SimAugment \cite{SimCLR}, we use a learning rate of $1e-1$ with $k_{1}=5000$ and $k_{2}=1.2$.

\subsection{Self-Supervised Setting}

For the self-supervised setting, we reuse the code provided by \cite{solo} and we are thankful to them for providing all the required details. The projector used for TCL is exactly the same as SimCLR for fair comparison and consists of one hidden layer of size 2048 and output size of 256. ResNet-18 is the common encoder used for all the methods. We use SGD optimizer with momentum=0.9 wrapped with LARS optimizer \cite{lars} and weight deacy of $1e-4$. Augmentation used is SimAugment \cite{SimCLR} and is done in the same manner as \cite{solo}. Gaussian blur is used for self-supervised setting. We use NVIDIA-GeForce-RTX-2080-Ti, NVIDIA-TITAN-RTX and NVIDIA-A100-SXM4-80GB GPUs for our experiments.

\paragraph{CIFAR-10 \cite{cifar}} All methods do 1000 epochs of contrastive pre-training on CIFAR-10 and images are reshaped to $32\times32$ in the data augmentation pipeline. We use batch size=256, same as SimCLR. For TCL, we use a starting learning rate of $4e-1$ for contrastive pre-training with $k_{1}=1$ and $k_{2}=1.5$. 

\paragraph{CIFAR-100 \cite{cifar}} All methods do 1000 epochs of contrastive pre-training on CIFAR-100 and images are reshaped to $32\times32$ in the data augmentation pipeline. We use batch size=256, same as SimCLR. For TCL, we use a starting learning rate of $4e-1$ for contrastive pre-training with $k_{1}=1$ and $k_{2}=1.5$.

\paragraph{ImageNet-100 \cite{ImageNet}} All methods do 400 epochs of contrastive pre-training on ImageNet-100 and images are rescaled to a size of $224\times224$. We use batch size=256, same as used by SimCLR. For TCL, we use a starting learning rate of $4e-1$ for contrastive pre-training with $k_{1}=1$ and $k_{2}=1.5$.

\end{document}